\newenvironment{myitem}{\begin{list}{$\bullet$}
{\setlength{\itemsep}{0pt}
\setlength{\topsep}{0pt}
\setlength{\leftmargin}{12pt}
\setlength{\parsep}{0pt}
\setlength{\partopsep}{0pt}}}%
{\end{list}}
\begin{document}
\mainmatter              
\title{\large Safe, Occlusion-Aware Manipulation for\\
Online Object Reconstruction in Confined Spaces}
\titlerunning{Safe Occlusion-Aware Manipulation}  
%
\author{Yinglong Miao \and Rui Wang  \and Kostas Bekris}%
\authorrunning{Yinglong Miao et al.} 
\tocauthor{Yinglong Miao, Rui Wang, and Kostas Bekris}%
\institute{Rutgers University, New Brunswkick, NJ, 08901, USA}
\maketitle              

\vspace{-.15in}
\begin{abstract}
Recent work in robotic manipulation focuses on object retrieval in cluttered spaces under occlusion. Nevertheless, the majority of efforts lack an analysis of conditions for the completeness of the approaches or the methods apply only when objects can be removed from the workspace. This work formulates the general, occlusion-aware manipulation task, and focuses on safe object reconstruction in a confined space with in-place rearrangement. It proposes a framework that ensures safety with completeness guarantees. Furthermore, an algorithm, which is an instantiation of this abstract framework for monotone instances is developed and evaluated empirically by comparing against a random and a greedy baseline on randomly generated experiments in simulation. Even for cluttered scenes with realistic objects, the proposed algorithm significantly outperforms the baselines and maintains a high success rate across experimental conditions.
\keywords{interactive perception, manipulation, task and motion planning, object rearrangement, object reconstruction.}
\end{abstract}
\section{Introduction}
Robotic manipulation has wide applications in manufacturing, logistics, and domestic domains. Unstructured scenes, such as logistics and home environments, require perception (mostly vision) for scene understanding \cite{bohg2017interactive}. Thus, uncertainty in perception, such as occlusion, complicates the manipulation process. This work aims to address this issue; specifically, to safely reconstruct unknown objects online in a confined space and fully reveal the scene through in-hand sensing and in-place relocation, given a static RGB-D camera.

Such online object reconstruction is valuable functionality for manipulation in realistic scenarios where novel and unknown objects can appear frequently and challenge model-based methods \cite{lu2021online}. Most applications, such as object retrieval in clutter and object rearrangement (e.g., setting up a dinner table), may require reconstructing only a subset of the objects in the scene, however, it is essential to understand how to reconstruct objects in these unstructured scenes with occlusion safely. Hence this work focuses on the case where all objects need to be reconstructed safely to completely reveal the scene while avoiding potential collisions. This  approach contributes a foundational understanding of the object reconstruction domain and the conditions under which safety and  probabilistic completeness can be achieved.

Addressing this task, however, can be challenging for a variety of reasons. For instance, robot reachability is restricted in confined spaces, such as shelves; this complicates viable object placements and planning safe robot movements. Reachability constraints introduce dependencies between objects, which restrict the order with which objects can be manipulated safely \cite{wang2022lazy}. On the other hand, occlusion among objects occurs given a single-camera view of the confined space. In such cases, it is hard to extract occluded objects safely, and thus poses another constraint on the order of object movements.
These two challenges are interleaved and need to be solved simultaneously.

The majority of work in robot manipulation has assumed known object models \cite{dogar2014object,nam2021fast,xiao2019online,li2016act}. More recently, the focus has shifted to cases where object models are unavailable. Nevertheless, most of these methods do not provide theoretical guarantees, either due to the dependence on machine learning 
\cite{bejjani2021occlusion,danielczuk2019mechanical,huang2020mechanical,huang2022mechanical}, or assumptions that are valid only in limited scenarios \cite{gupta2013interactive}. Learning-based methods bring great promise to achieve efficiency and are highly 
desirable if they can be applied within a safe and complete framework. This work aims to provide such a framework on top of which more efficient solutions can be built.

In summary, this work first categorizes problems in object manipulation under occlusion and surveys past work in this field. Then, the task of safe object reconstruction in a confined space with in-place relocation is formalized. This formalization geometrically defines occlusion and the incurred object relationships. This leads to a probabilistically complete algorithmic framework with accompanying proof. The framework is general and can have different instantiations based on the implementation of primitives. An instantiation for monotone problems is proposed, which empirically shows significant improvements against a random and a greedy baseline in terms of computation time and number of actions taken, with a high success rate
\footnote{Code and videos: \url{sites.google.com/scarletmail.rutgers.edu/occlusion-manipulation}.}.

\section{Parameters of Occlusion-Aware Manipulation}
This section considers parameters that define the difficulty of occlusion-aware manipulation problems.

\textbf{Uncertainty in perception}
Continuous uncertainty arises mainly due to noise in perception, while  discrete uncertainty arises due to occlusions. Most efforts have focused on continuous uncertainty by introducing a probability distribution (e.g., a Gaussian PDF) to represent a belief state. In contrast, \cite{wang2020safe} explicitly models the uncertainty as a discrete set of hypotheses and constructs a motion planner for a picking task. Discrete uncertainty can be modeled via geometric reasoning. This work focuses on this direction and assumes perfect sensing without continuous noise. Alternatives in the literature do not explicitly model visual uncertainty and often apply learning-based methods \cite{qureshi2021nerp}.

\textbf{Workspace Constraints: Confined spaces}
Confined spaces, such as shelves, which are the focus of this work, are more constraining than a tabletop, which is a setup widely studied \cite{xiao2019online,price2019inferring,novkovic2020object}, since lateral access limits grasp poses. 

{\bf External Buffers} Problem difficulty also depends on whether external buffers for object placement exist. When unlimited external buffers are provided, objects can be gradually extracted. Otherwise in-place rearrangement is necessary, which requires finding placement poses \cite{gao2021fast,cheong2020relocate,ahn2021integrated}.

\textbf{Object Model Knowledge} The level of knowledge about objects can range from known models, to category-level shape knowledge to unknown object models. With known object models, objects are often assumed to be perfectly recognized once a part becomes visible \cite{dogar2014object}; or with an error that is modeled probabilistically \cite{li2016act}. Category-level knowledge can provide hints to estimate object models. For instance, when objects are cylinders, the pose and size can be estimated from visible parts \cite{nam2021fast}. When object models are unknown, objects may need to be reconstructed \cite{lu2021online,gupta2013interactive}. When the number of hidden objects is given, their poses can be sampled inside the occlusion region \cite{bejjani2021occlusion,xiao2019online,li2016act}. Otherwise, the scene needs to be fully revealed when it is unknown in the worst case.

\textbf{Clutter and Object Relationships}  The level of clutter affects the ability to grasp objects and place objects freely \cite {cheong2020relocate,ahn2021integrated}.   Furthermore, objects may interact in non-trivial ways. For instance, objects can be stacked on top of each other, which constrains the sequence of grasps \cite{kumar2021graph,qureshi2021nerp}. Alternatively, objects can be physically unstable in unstructured scenes, such as heaps and piles \cite{danielczuk2019mechanical,kurenkov2020visuomotor}. 

\textbf{Camera Placement} The camera, when attached to the robot arm, can be moved to reconstruct the scene \cite{bejjani2021occlusion}. On the other hand, when the camera is static, the robot must manipulate objects to place them in the camera's field of view, which requires solving motion planning problems under visibility constraints.

\textbf{Action Type} A mobile-based robot has higher reachability than a fixed base one. Non-prehensile actions allow more flexibility, such as moving groups of objects, than prehensile actions but increase the level of uncertainty. Manipulation Among Movable Obstacles \cite{stilman2007manipulation} often employs such non-prehensile operations.

\textbf{Combinatorial Aspects}
A monotone rearrangement problem is one where objects need to be moved at most once. Non-monotone problems are harder instances of rearrangement \cite{wang2021efficient}. This work aims to extend notions of monotonicity in the context of manipulation under occlusion and visibility constraints.

\section{Related Work}
Object manipulation are generally solved by Task and Motion Planning (TAMP) \cite{garrett2021integrated}, where past works have focused on object rearrangement in cluttered and confined spaces \cite{wang2021efficient,wang2022lazy}. Interactive Perception focuses on the interplay of perception with manipulation \cite{bohg2017interactive,garrett2020online}. 

\textbf{Model-Based Object Retrieval in a Shelf}
Many works in the literature focus on object search and retrieval from a shelf with known object models or through learning methods
\cite{dogar2014object,nam2021fast,bejjani2021occlusion,xiao2019online,li2016act,huang2020mechanical,huang2022mechanical}. In this setting, the perception module is assumed to fully recognize the object once a sufficient part is observed, and ignores partial occlusions. Prior work uses a graph to encode object occlusion and reachability dependencies and constructs an algorithm to solve the problem optimally \cite{dogar2014object}.
An alternative encodes the traversability of objects through a graph named T-graph, which limits to cylinders \cite{nam2021fast}. Both of these efforts remove objects from the scene, in which case completeness is trivial to satisfy.


\textbf{POMDP-based approaches} The POMDP framework has been used to formulate the problem as "Mechanical Search" \cite{danielczuk2019mechanical}, which has been applied in different settings \cite{huang2020mechanical,huang2022mechanical}. Various works have relied on POMDP formulations \cite{xiao2019online,bejjani2021occlusion,li2016act}, which are solved by standard solvers or Reinforcement Learning (RL) algorithms. In this context, the trustworthiness of object recognition has been explicitly modeled based on the occlusion ratio \cite{li2016act}. Learning-based methods have been proposed to sample target poses \cite{huang2020mechanical}, which are then used to guide a tree-search framework with an RL policy \cite{bejjani2021occlusion}. Some of the efforts use learning-based estimators to predict the target object pose in 1D along the image width, which then guides the planner to find the policies, but is incomplete in solving the problem compared with baselines that have full knowledge of the scene \cite{huang2020mechanical}\cite{huang2022mechanical}.
The most related effort to this work \cite{gupta2013interactive} constructs a multi-step look-ahead algorithm that completely explores occluded regions in a grid-world scenario. While it extends the algorithm to realistic settings, the proposed perception module is ad-hoc.
Meanwhile, the planner relies on a sampling of actions to move valid objects instead of analyzing the dependency structure, which may provide better properties for more efficient planning.
Furthermore, the method does not reconstruct objects to retain information and better represent the objects.

\textbf{This Work}
A complete solution can be easily constructed with known object models by combining prior work. This work considers a complicated task with unknown object models and focuses on safe object reconstruction with in-place relocation.
At the same time, this work introduces a set of assumptions for theoretical analysis. In particular, objects are assumed to be convex, specifically cylinders or rectangular prisms, and do not stack on top of each other. 
This work adopts a TSDF representation \cite{newcombe2011kinectfusion}, similar to prior efforts \cite{lu2021online}, to reconstruct object models. To ensure safety, it models the object by a conservative volume, and an optimistic volume following prior work \cite{mitash2020task}. In-place object rearrangement is a hard problem since the simplified version of circle packing in 2D is already NP-hard \cite{demaine2010circle}. Recent works \cite{cheong2020relocate,ahn2021integrated} sample placements and evaluate them based on reachability conditions, but limit to cylindrical objects. Instead, this work applies to general-shaped objects by a novel convolution-based method to sample collision-free placements, similar to the method used in \cite{huang2022mechanical}.

%
%

\section{Problem Formulation}
A workspace $W$ is defined as the volume inside a shelf with a supporting surface where objects can be placed stably. At any moment, it is composed of free, collision, and occlusion space, given the robot's knowledge. An object model is defined as the set of points occupied by the corresponding object given an object pose. The ground-truth object model is denoted as $X_i(s_i)$, which indicates the set of points occupied by object $o_i$ at pose $s_i$. As only an estimate of the object model is available during execution, two approximations are used:
\begin{myitem}
\item a conservative volume $X^{con}_i(s_i)$ that contains the space occluded or occupied by object $i$; 
\item an optimistic volume $X^{opt}_i(s_i)$ that contains the revealed surfaces of object $i$.
\end{myitem} 

At any time, these subsets satisfy that $X^{opt}_i(s_i)\subseteq X_i(s_i)\subseteq X^{con}_i(s_i)$. The conservative volume is used during motion planning for collision checking to ensure safety (discussed later in this paper); the optimistic volume is used to compute a picking pose for a suction gripper. $s^0(i)$ denotes the initial pose of object $o_i$. A robot is represented by its geometry $R(q)$, which is the set of points occupied by the robot at the robot state $q$ (e.g., the set of the robot's joints' angles). A suction cup is attached to the robot arm and can perform two actions: attach and detach. After attaching, the object is assumed to have a fixed transformation relative to the gripper until detached.
The prehensile action of picking and placing is parameterized as $(o_i,s_i,\tau_i)$, which encodes moving object $i$ to pose $s_i$ via the robot trajectory $\tau_i$.


\textbf{Occlusion}
A pinhole camera is fixed relative to the robot base, with a pose denoted as $c$. Perception is performed at discrete-time steps after each manipulation action. The occlusion space due to object $o_i$ at time $t$ is then represented as: 
$O_i(s_i,t)\equiv
\{c+\alpha (x-c):\alpha\in(1,\infty),x\in X_i(s_i)\},$
which arises from the RGBD image captured by the camera $c$ at time $t$. 
The total occlusion from all objects at time $t$  is defined as
$O(t)=\bigcup_i O_i(s_i,t)$. The variable $t$ will be often omitted if its value is obvious or unimportant, resulting in the occlusion space $O_i(s_i)$ for an object pose $s_i$ of object $o_i$. The occlusion space due to object $o_i$ up to time $t$ given the observation history is then $O_i(s_i,1..t)\equiv O_i(s_i,t)\cap\bigcap_{k=1}^{t-1}O(k)$.
The total occlusion space up to time $t$ is $\bigcap_{k=1}^tO(k)$.

The visible surface of an object $o_i$ at pose $s_i$ can be extracted from the image:
$F_i(s_i)=
\{x:\phi(x)\in\phi(X_i(s_i)),d(x)=depth(\phi(x))\}$.
Here $\phi(\cdot)$ denotes the projection of the 3D geometry on the 2D image; $d(x)$ denotes the depth of point $x$ relative to the camera; and $depth(\phi(x))$ denotes the value of the depth image at the projected pixel $\phi(x)$. 
Due to occlusion, an object may not be fully visible given an image. An object is defined to be \textbf{fully revealed} at time $t$ if all points on its visible surface have been revealed up to time $t$. It is \textbf{partially revealed} if some points on the visible surface of its model have been seen but not all yet. Otherwise, the object is \textbf{hidden}, and the camera has not seen any part of the visible surface of its model. 

\textbf{Safety}
The notion of safety considered here imposes two requirements. First, the conservative model of an object should not collide with other objects or the scene. Meanwhile, an object should not collide with hidden objects or parts, which requires a careful analysis of the extraction motion\footnote{Please refer to the Appendix for an analysis of object extraction. Appendix can be found at:
\url{https://arxiv.org/abs/2205.11719}.}.
Extraction motions can be found for specific object shapes, such as cylinders and rectangular prisms, which are assumed by this work. For such cases, this work uses an extraction direction perpendicular to the shelf opening in its reasoning to guarantee safety.

Thus, the problem is to find a sequence of actions
$[(o_i,s_i,\tau_i)]$ that allows to fully reconstruct all objects (i.e., remove the occlusion volume) safely.


\section{Algorithms and Analysis}
\SetKw{Continue}{continue}
\SetKw{Break}{break}

\newcommand{\bd}{bd}
This section defines a data structure to represent object dependencies and then illustrates a theoretical framework that attains probabilistic completeness. A concrete instantiation is introduced later to solve monotone tasks.

\subsection{Occlusion Dependency Graph (ODG)}
Our algorithm depends on a graph which represents visibility constraints among objects.
To construct the graph, the occlusion relationship between objects needs to be defined.
Recall that the occlusion space of an object $i$ is defined to be $O_i(s_i)$.
Then for objects $i$ and $j$ at poses $s_i$ and $s_j$, the part of $j$ occluded by $i$ can be defined as $O_{ij}(s_i,s_j)\equiv O_i(s_i)\cap X_j(s_j)$.
Following this, object $i$ at pose $s_i$ is defined to \textbf{occlude} (directly or indirectly) another object $j$ at pose $s_j$ if $O_{ij}(s_i,s_j)\neq\emptyset$.
As objects may jointly occlude a region, it is helpful to separate the occlusion contribution of each object. To achieve this,
the \textbf{direct occlusion space} of an object $i$ at pose $s_i$ can be defined:
$\tilde{O}_i(s_i)\equiv
O_i(s_i)\cap(\bigcup_{j\neq i}\overline{O_j(s_j)})$, where $\overline{O_j(s_j)}$ denotes the complement of set $O_j(s_j)$.
In other words, it is the region where the camera ray does not pass through other objects after object $i$.
Then the part of object $j$ directly occluded by object $i$ can be defined as $\tilde{O}_{ij}(s_i,s_j)\equiv\tilde{O}_i(s_i)\cap X_j(s_j)$.
Similar as before, object $i$ at pose $s_i$ is defined to \textbf{directly occlude} $j$ at pose $s_j$ if $\tilde{O}_{ij}(s_i,s_j)\neq\emptyset$.
The differences between occlusion space and direct occlusion space of an object are illustrated in Fig. \ref{fig:odg}.

Building on the above-defined notions, a directional graph can be built in order to model object extraction dependencies, which is named the \textbf{Occlusion Dependency Graph (ODG)}.
In this graph, each node $i$ represents the object $i$. Two nodes $i$ and $j$ are connected by edge $(j,i)$, if $\tilde{O}_{ij}(s_i,s_j)\neq\emptyset$,
i.e. if object $i$ directly occludes $j$.
An example of the ODG is shown in Fig. \ref{fig:odg}, where a node $C$ is included to represent the camera, and an edge $(i, C)$ exists if part of object $i$ is visible from the camera.
\begin{figure}[ht]
    \centering
     \begin{subfigure}[t]{0.31\textwidth}
        \centering
        \includegraphics[width=0.98\textwidth]{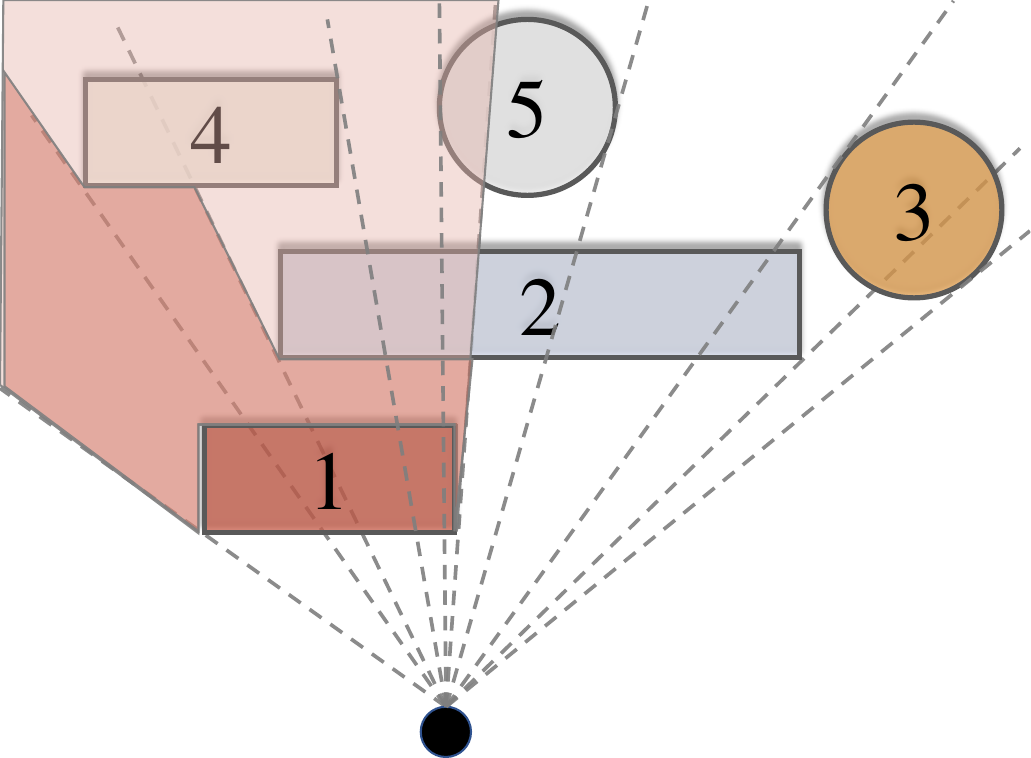}
         \caption{}
         \label{fig:odg-1}
     \end{subfigure}\quad
     \begin{subfigure}[t]{0.31\textwidth}
         \centering
        \includegraphics[width=0.98\textwidth]{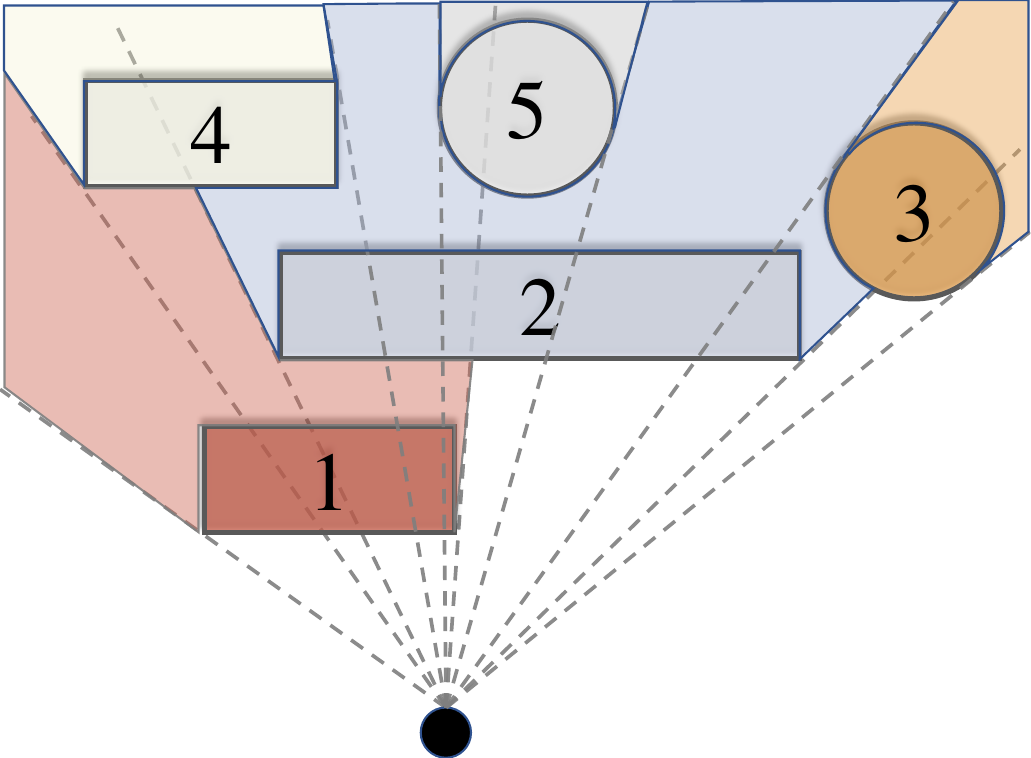}
        \caption{}
         \label{fig:odg-2}
     \end{subfigure}\quad 
     \begin{subfigure}[t]{0.31\textwidth}
         \centering
        \includegraphics[width=0.7\textwidth]{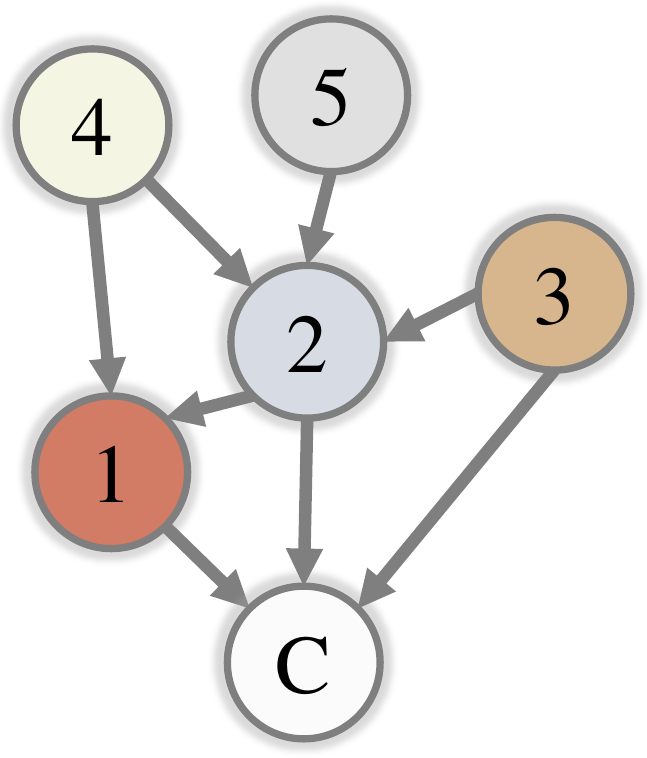}
        \caption{}
        \label{fig:odg-3}
     \end{subfigure}    
    \caption{Illustration of ODG. (a): Occlusion region and direct occlusion region of object 1. The dark red region indicates the direct occlusion region, while the occlusion region includes the light and the dark regions.
    (b): Direct occlusion region of each object. Each occlusion region is shown using a similar color as the corresponding object.
    (c): ODG of the objects. An edge $(i,j)$ exists if object $j$ directly occludes object $i$. $C$ indicates the camera.}
    \label{fig:odg}
    \vspace*{-5mm}
\end{figure}
Safety constraints require the topological order of ODG to be followed so that only fully revealed objects are extracted first.
The existence of such an order is guaranteed when the ODG is a Directed Acyclic Graph (DAG), the condition of which is shown in Lemma \ref{lemma:convex}\footnote{The proof can be found in the Appendix: \url{https://arxiv.org/abs/2205.11719}.}.
\vspace*{-1mm}
\begin{lemma}
\label{lemma:convex}
Define the relation $A\prec B$ if there exists $a$ on the 2D projection of object $A$ and $b$ on the projection of $B$ such that $(c,a,b)$ forms a straight line, where $c$ indicates the camera 2D projection. Then if the top-down projections of objects are convex and do not intersect with each other, objects are acyclic with the defined relation.
\end{lemma}
\vspace*{-1mm}
If object $A$ occludes $B$, it must hold that $A\prec B$. Thus, if the ODG contains a cycle, there exists a cycle for the relation "$\prec$". Hence for objects whose top-down projections are convex and do not intersect, the ODG is a DAG.
In practice, the ODG can only be gradually revealed from partial observations but not fully constructed from the initial observation. Nevertheless, the topological order of ODG can still be followed by image cues even without ODG\footnote{See Appendix for a detailed discussion: \url{https://arxiv.org/abs/2205.11719}}. Having a topological order is not sufficient, as other objects can block extraction, and in-place relocation\footnote{In-place relocation refers to rearrangement of objects within the workspace, i.e., not utilizing buffer space for the rearrangement that is external to the workspace.} can modify the scene.
To jointly tackle these challenges, while following the topological order of the ODG, a theoretical framework is introduced in the following section.

\subsection{A Probabilistically Complete Algorithm}
The above challenges can be decoupled into the reachability and visibility challenges. The former involves finding valid object poses and motion planning of the robot, while the latter involves reconstructing objects and revealing occlusion.
The following primitives can be introduced to tackle them individually:
\begin{enumerate}
\item $reconstruct(i)$: this primitive extracts fully-revealed object $i$, reconstructs it, and places it back to its initial pose.
\item $reach\_rearrange(V,i)$: this primitive rearranges objects in set $V$ so that object $i$ becomes reachable, and can be safely extracted.
\item $vis\_rearrange(V,O)$: this primitive rearranges objects in set $V$ so that part of the occlusion space $O$ becomes revealed, and returns the revealed space.
\end{enumerate}
These primitives are general and can be implemented differently in practice. Building upon them, a framework can be obtained as shown in Algorithm \ref{alg:complete-algo}.
This algorithm only reconstructs fully revealed objects and relocates objects after they are reconstructed. Hence it follows the topological order of ODG and ensures safety.
The algorithm ensures an object $i$ is extractable by $reach\_rearrange(V^*,i)$ where $V^*$ is the set of reconstructed objects. It makes sure the direct occluded space of object $i$ is revealed by $vis\_rearrange(V^*,\tilde{O}_i(s_i^0))$, where $\tilde{O}_i(s_i^0)$ can be determined based on images.
The algorithm postpones failed actions and attempts them again when more space is revealed. It terminates when all actions fail, indicating that the problem is unsolvable, and avoiding infinite loops that make no progress.

\vspace*{-5mm}
\begin{algorithm}
\DontPrintSemicolon
\begin{small}
\SetKwComment{Comment}{\% }{}
\caption{Target Extraction Under Occlusion Framework}
\label{alg:complete-algo}
\SetAlgoLined
    init action queue $Q=[]$,
    revealed objects $V$,
    and reconstructed objects $V^*=\emptyset$\\
    update action queue: $Q=[reconstruct(i),\forall i\in V]$\\
    \While {$|V^*|\neq n$}
    {
    If all actions in $Q$ failed, then \textbf{terminate} as an infinite loop occurs, and the problem is unsolvable\\
    $action\gets Q.pop()$\\
    \If{$action==reconstruct(i)$}
    {
        \If{$reach\_rearrange(V^*,i)==$FALSE}
        {$Q.push(reconstruct(i))$\\}
        \Else
        {
        $reconstruct(i)$\\
        $V^*\gets V^*\cup\{i\}$\\
        $Q.push(vis\_rearrange(\_,\tilde{O}_i(s^0_i)))$\\
        }
    }
    \If{$action==vis\_rearrange(\_,O)$}
    {
    $O'\gets vis\_rearrange(V^*,O)$\\
    \If{$O'==\emptyset$}
    {
        $Q.push(vis\_rearrange(\_,O))$\\
    }
    \Else
    {
        $V'\gets $obtain newly revealed objects with revealed occlusion space\\
        $Q.push(reconstruct(i))\text{ }\forall i\in V'$\\
        $V\gets V\cup V'$\\
        $Q.push(vis\_rearrange(\_,O\setminus O'))$\\
    }
    }
}   
\end{small}
\end{algorithm}
\vspace*{-5mm}

\begin{lemma}
The algorithm defined in Algorithm \ref{alg:complete-algo} is probabilistically complete (and interchangeably resolution-complete).
\end{lemma}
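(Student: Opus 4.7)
The plan is to reduce the probabilistic completeness of Algorithm \ref{alg:complete-algo} to that of its three primitives plus a safety-and-progress argument at the outer-loop level. First I would make explicit the standing assumption that each of $reconstruct$, $reach\_rearrange$, and $vis\_rearrange$ is itself probabilistically complete on its subproblem: each samples configurations (grasps, placements, trajectories) from distributions with full support on the feasible set, so a single call succeeds with probability bounded away from zero whenever its subproblem is feasible. Under this assumption I would restate the target claim as: whenever a feasible solution to the problem formulation exists, the probability that Algorithm \ref{alg:complete-algo} reaches the state $|V^*|=n$ tends to $1$ as the number of primitive invocations tends to infinity.

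Next I would establish safety and a monotone progress measure. Safety follows directly from the algorithm's structure: $reconstruct(i)$ is invoked only for $i\in V$, i.e., only for fully revealed objects, and every rearrangement call takes $V^*$ as its first argument, so only already-reconstructed objects are ever moved. Hence the topological order of the ODG is respected and no action manipulates an unknown shape. For progress I would introduce the potential $\Phi=(n-|V^*|,\text{ volume of pending occlusion referenced in }Q)$ under lexicographic order. Every successful $reconstruct$ strictly decreases the first coordinate, and every successful $vis\_rearrange$ returning a nonempty $O'$ strictly decreases the second. Since both coordinates are bounded below by $0$, only finitely many successful primitive calls are possible before $|V^*|=n$.

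The central step is to show that, conditional on the existence of a solution, each outer iteration has uniformly positive probability of producing a successful primitive call, so $\Phi$ reaches $0$ almost surely. I would argue this by induction on $n-|V^*|$: if a feasible plan exists from the current state, then some object $i\notin V^*$ admits a safe extraction after some rearrangement in $V^*$; probabilistic completeness of $reach\_rearrange$ then ensures $\Pr[reconstruct(i)\text{ succeeds within }k\text{ retries}]\to 1$, and an analogous argument applies to the $vis\_rearrange$ tasks that expose new objects. Combined with the potential argument, this pushes $\Phi$ to zero with probability one.

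The hardest part will be the failure branch: the line ``if all actions in $Q$ failed, terminate'' must not fire whenever a solution exists. I would address this by reading ``failed'' as the event that every queued action has been retried unboundedly often without success, which, under feasibility and full-support sampling, has probability zero by a Borel--Cantelli style argument; equivalently, the clause should be understood as a practical stopping rule whose triggering certifies infeasibility only in the limit. A secondary subtlety is the coupling between the two rearrangement primitives, since a placement favorable for visibility may impair reachability and vice versa, but because both primitives resample from full-support distributions over unbounded attempts, any adverse configuration is revised with probability one, so the inductive step carries through.
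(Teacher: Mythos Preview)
Your approach is genuinely different from the paper's. The paper does not argue via a potential function and Borel--Cantelli; instead it gives a \emph{solution-conversion} argument. Starting from an arbitrary safe solution $[(i_j,s_j,\tau_j)]$, it identifies for each object the first action applied to it, splits that action into ``reconstruct and return to initial pose'' followed by ``move to $s_{\pi(k)}$'', inserts a (possibly dummy) $reach\_rearrange(V^*,i_{\pi(k)})$ before each $reconstruct$, relabels any action that reveals part of $\tilde{O}_i(s_i^0)$ as a $vis\_rearrange$, and absorbs the leftover moves of already-reconstructed objects into the next rearrange primitive. The upshot is that every feasible solution is equivalent to a sequence of exactly the three primitives in exactly the order Algorithm~\ref{alg:complete-algo} emits them, so probabilistic completeness of the primitives transfers to the algorithm. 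Your progress-measure argument is more probabilistically explicit, which is a virtue; the paper's conversion argument is more structurally explicit about \emph{why} the primitives suffice.

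There is, however, a real gap in your inductive step. You assert that ``if a feasible plan exists from the current state, then some object $i\notin V^*$ admits a safe extraction after some rearrangement in $V^*$,'' but you never establish that feasibility is preserved \emph{from the current state}. After several iterations, the objects in $V^*$ may sit at poses chosen by earlier $reach\_rearrange$ or $vis\_rearrange$ calls, poses that need not appear anywhere in the original solution. To close the induction you must argue that the unreconstructed objects are still at their initial poses (this is exactly why $reconstruct$ returns the object to $s^0(i)$) and that the rearrange primitives, being complete on $V^*$, can always drive the reconstructed objects to whatever configuration the original solution requires at the corresponding stage. The paper's conversion argument handles precisely this: the ``put back to initial pose, then move'' decomposition and the absorption of stray moves into the next rearrange call are what guarantee that the algorithm's state is always recoverable to one on the canonical primitive sequence. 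Without that piece, your induction hypothesis (``a feasible plan exists from the current state'') is asserted but not maintained across iterations.
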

\vspace*{-1mm}
\begin{proof}
If the algorithm finds a solution, the solution is correct, as it follows the topological order of ODG and reveals objects and occlusion safely.
Conversely, if a solution exists, it can be converted to a sequence of the primitives mentioned above.
Denote $(i_j,s_j,\tau_j)$ as the $j$-th action in the solution, and denote the index of the first action applied to object $i$ as $\pi(i)$.
Since the solution follows the topological order of ODG, it can be reformulated as:
\begin{align*}
[&(i_{\pi(1)},s_{\pi(1)},\tau_{\pi(1)}),
(i_{\pi(1)+1},s_{\pi(1)+1},\tau_{\pi(1)+1}),
\dots,\\
&(i_{\pi(2)},s_{\pi(2)},\tau_{\pi(2)}),
(i_{\pi(2)+1},s_{\pi(2)+1},\tau_{\pi(2)+1})
,\dots,(i_{\pi(n)},s_{\pi(n)},\tau_{\pi(n)})
]
\end{align*}
As the solution is safe, $\{i_{\pi(k)+1},i_{\pi(k)+2},\dots,i_{\pi(k+1)-1}\}$ are objects that are reconstructed before object $i_{\pi(k+1)}$, hence only includes objects in $\{i_{\pi(1)},\dots,i_{\pi(k)}\}$.
Notice that $(i_{\pi(k)},s_{\pi(k)},\tau_{\pi(k)})$ can be converted into two actions:
\[(i_{\pi(k)},s^0(i_{\pi(k)}),\tau^1_{\pi(k)}),(i_{\pi(k)},s_{\pi(k)},\tau^2_{\pi(k)}),\]
which puts object $i_{\pi(k)}$ back to its initial pose after reconstruction, and then to pose $s_{\pi(k)}$. The first action is equivalent to primitive $reconstruct(i_{\pi(k)})$. Hence
\[[(i_{\pi(k)},s_{\pi(k)},\tau_{\pi(k)})]\longrightarrow
[reconstruct(i_{\pi(k)}),\tau^1_{\pi(k)}),(i_{\pi(k)},s_{\pi(k)},\tau^2_{\pi(k)})].\]
After the conversion, as an object must be reachable before extracted, a dummy action
$reach\_rearrange(V^*,i_{\pi(k)})$ can be inserted before each $reconstruct(i_{\pi(k)})$ action. The action is dummy as the object is already reachable and nothing needs to be done. The object set $V^*$, as argued before, includes 
all reconstructed objects in 
$\{i_{\pi(1)},i_{\pi(2)},\dots,i_{\pi(k-1)}\}$. Hence
\[[reconstruct(i_{\pi(k)})]\longrightarrow
[reach\_rearrange(V^*,i_{\pi(k)}), reconstruct(i_{\pi(k)})].\]
Then any action that reveals part of $\tilde{O}_i(s_i^0)$ for some object $i$ can be changed to $vis\_rearrange(V^*,\tilde{O}_i(s_i^0)$, where $V^*$ includes fully reconstructed objects at that time.
For simplicity of analysis, it is safe to assume $reconstruct(i)$ only reconstructs object $i$ but does not reveal occlusion, adding an extra action afterward.
In such cases, $vis\_rearrange(V^*,\tilde{O}_i(s_i^0))$ must be after $reconstruct(i)$.

After applying the above conversions, there may still be unconverted actions that do not reveal any part of occlusion.
Since these actions must be applied to a reconstructed object at that time, it is safe to merge them to the next $reach\_rearrange(\cdot,\cdot)$ or $vis\_rearrange(\cdot,\cdot)$ primitive.
After the above steps, the solution is changed to a sequence of primitives, which can be found by Algorithm \ref{alg:complete-algo} probabilistically and asymptotically as the number of iterations increases.
Hence the algorithm is probabilistically complete.
\end{proof}
The framework \ref{alg:complete-algo} is probabilistically complete but depends on two rearrangement primitives, which may run infinitely. The time complexity of the resolution-complete version of the framework is included in the Appendix section.
A monotone algorithm is introduced in the following section to handle practical cases.

\subsection{A Monotone Algorithm}
\begin{figure}[ht]
    \centering    \includegraphics[width=0.99\textwidth]{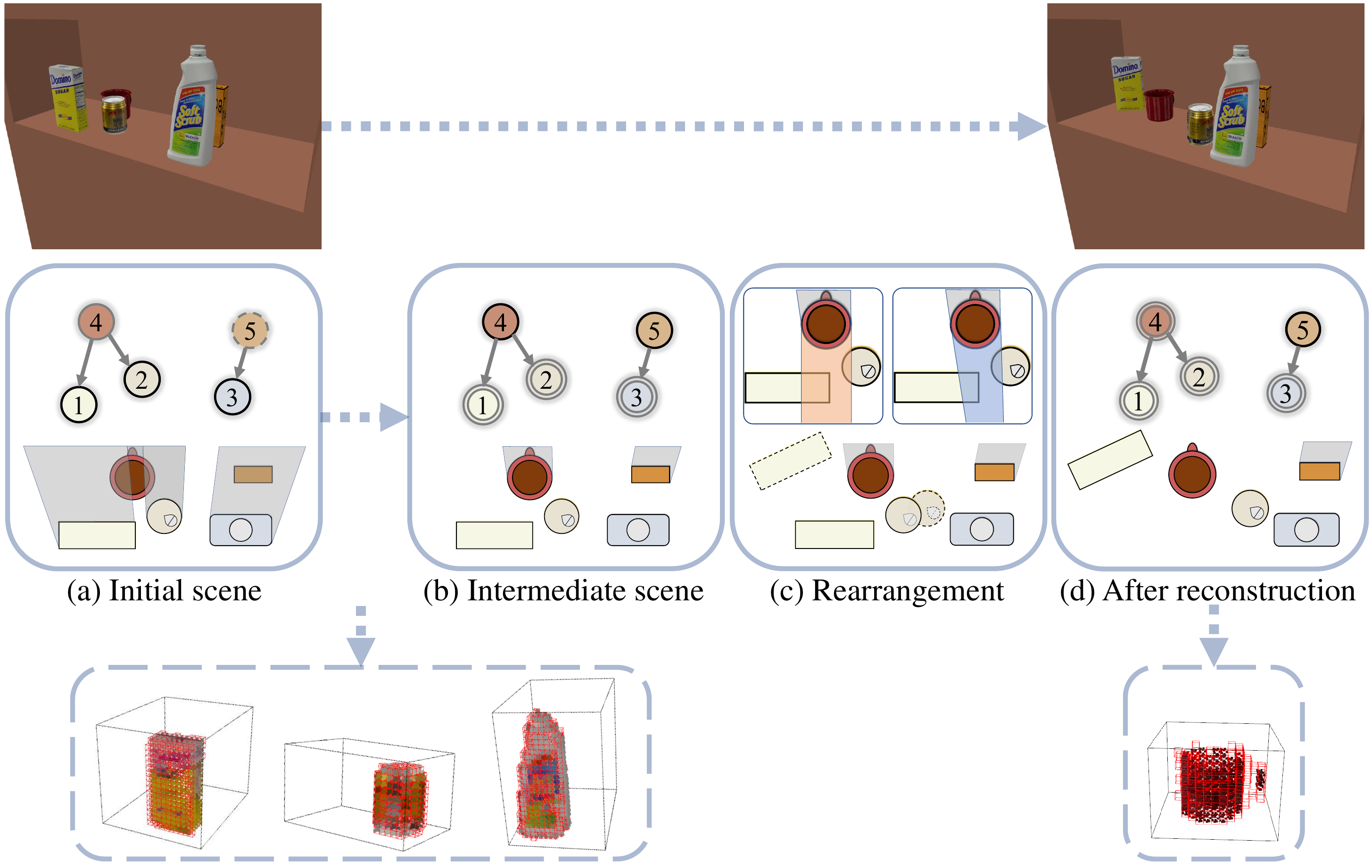}
    \caption{One iteration of the monotone algorithm. The 2D projection of objects and the ODG are illustrated. In (a), objects 1, 2, and 3 are fully revealed and highlighted in ODG. 5 is hidden and hence is dashed. 4 is partially occluded. After reconstruction of 1, 2, and 3, in (b), they are marked by double circles in ODG. 4 and 5 become fully revealed and hence highlighted. Before extracting 4, the sweeping volume shown in red in (c) and the vision cone shown in blue are visualized. The blocking objects 1 and 2 are rearranged by sampling placements. Then 4 is extracted and reconstructed, with the occlusion reduced as in (d).
    }
    \label{fig:method-pipeline}
\vspace*{-5mm}
\end{figure}


There are two primitives which may require repeated moves of some objects: 
$vis\_rearrange(\cdot,\cdot)$ when multiple calls are required to reveal an occluded region; and when object rearrangement requires repeated moves of objects.
The monotonicity can thus be defined where each directly occluded region can be revealed by one $vis\_rearrange(\cdot,\cdot)$ action, and object rearrangement moves each object at most once.
Algorithm \ref{alg:fast-algo} solves the case when both subproblems are monotone. 
The algorithm rearranges reconstructed objects so that the object to be extracted and reconstructed next is not blocked or visually occluded by others.
Thus, the direct occlusion space can be fully revealed after extracting the object.
As both reachability and visual occlusion are tackled in the rearrangement primitive,
the algorithm defines a
primitive $rearrange(i)$ that combines $reach\_rearrange(V,i)$ and $vis\_rearrange(V,O)$.

As illustrated in Figure \ref{fig:method-pipeline} step (c), during rearrangement, the algorithm computes the sweeping volume and the vision cone in front of the object to check blocking objects and visual occlusion. If objects that have not been reconstructed exist in this region, then $rearrange(i)$ fails as they cannot be safely moved. Otherwise, the primitive rearranges all reconstructed objects in this region. It may also move other reconstructed objects outside this region to open space for placements.
After that, It samples and verifies placements to avoid the computed region.
Finally, a trajectory is found by motion planning and executed.

The primitive $reconstruct(i)$ decomposes the trajectory into a suction phase, an extraction phase, a sensing phase, and a placement phase. The suction phase computes a trajectory to move the manipulator to a valid suction pose. The extraction phase then extracts the object out of the shelf. The sensing phase completely reconstructs the object by a sequence of in-hand moves. Finally, the placement phase places the object back to its initial pose and resets the robot.
The algorithm guarantees that $\tilde{O}_i(s^0_i)$ is revealed when object $i$ is reconstructed. Then following the topological order of ODG, it reconstructs all objects safely and reveals the entire scene.
The algorithm is probabilistically complete for monotone cases following a similar proof. However, experiments show it can solve many practical tasks and perform better than the constructed baselines.
\vspace*{-5mm}
\begin{algorithm}
\DontPrintSemicolon
\begin{small}
\SetKwComment{Comment}{\% }{}
\caption{Target Extraction Under Occlusion Monotone Algorithm}
\label{alg:fast-algo}
\SetAlgoLined
    initialize reconstructed objects $\bar{V}=\emptyset$, failure count $N_F=0$\\
    \While {$|\bar{V}|\neq n$}
    {
    visible objects $S$, revealed objects $V$, occlusion region $\hat{O}\gets$ perception()\\
    $i\gets (V\setminus\bar{V})[0]$\\
    rearrange($i$)\\
    status, newly revealed object set $\tilde{V}$, updated occlusion region $\hat{O}$, updated object model $\hat{X}_i$ $\gets$ reconstruct($i$)\\
    \If {status $==$ FAILURE}
    {
        $V \gets V[1:]\cup \{i\}$, 
        $N_F\gets N_F+1$\\
    }
    \Else
    {
        reset failure count $N_F\gets0$\\
        $V\gets V\cup\tilde{V}$, 
        $\bar{V}\gets\bar{V}\cup\{i\}$\\
    }
    
    \If {$N_F==|V\setminus\bar{V}|$}
    {
        \Return FAILURE\\
    }
    }
\end{small}
\end{algorithm}
\vspace*{-7mm}

\section{System Design}
The system setup uses the left arm of a Yaskawa Motoman robot with a suction gripper. It is an 8 DOF kinematic chain including the torsional joint, with a static RGBD camera. Voxels are used to model the workspace. Each voxel can be free, occupied, or occluded. The occlusion space of each object is computed given the depth and segmentation images by:
$\hat{O}_i(s_i,t)=
\{x:\phi(x)\in\phi(X_i(s_i)),d(x)\geq depth(\phi(x))\}$.
The net occlusion up to time $t$ is obtained through the intersection of occlusion at each time. Each object is modeled via voxels and TSDF for reconstruction \cite{newcombe2011kinectfusion}.
Lastly, objects' pixel positions and depth values are used to determine their occlusion relationship, which is sufficient for convex objects
\footnote{
The details are included in the Appendix: \url{https://arxiv.org/abs/2205.11719}.}.
The occlusion caused by the robot is filtered out using the ground-truth segmentation image obtained from the simulator and is not used for occlusion update.

To sample viable placements in $rearrange(i)$, the system uses a novel method by convolution. It first projects the collision space and object geometry to 2D maps, then computes a convolution of them to obtain collision-free 2D placement poses. Samples are then drawn from this set and validated by IK and collision check.
The system reconstructs an object by moving it to several waypoints outside the shelf. Waypoints are sampled such that the information gain (info-gain), computed as the volume of unseen space of the object, is maximized, similar to \cite{huang2019building}. Fig. \ref{fig:method-reconstruct} illustrates the process, where the pose with the max info-gain is selected as the next waypoint.

\begin{figure}[ht]
    \centering    \includegraphics[width=0.99\textwidth]{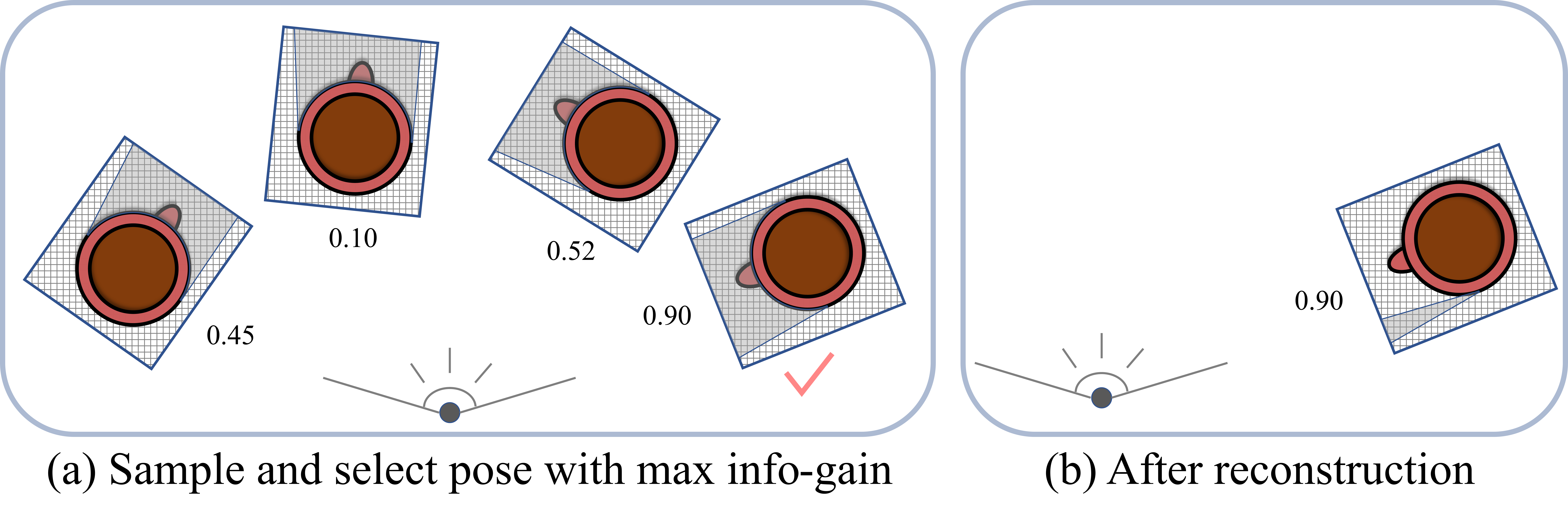}
    \vspace{-.15in}
    \caption{Object reconstruction by sampling poses and computing "info-gain". Sensing is done at several waypoints, each obtained through the process shown in the figure. Potential poses are first sampled with info-gain computed. Then the pose with the max info-gain is used as the next waypoint for sensing.}
    \label{fig:method-reconstruct}
    \vspace*{-5mm}
\end{figure}

The system uses the PyBullet physics engine \cite{coumans2021} to model the scene. The RGBD and segmentation images captured by the simulated camera are used for perception. Motion planning is achieved through MoveIt \cite{coleman2014reducing}.

\section{Experiments}
\subsection{Experimental Setup}
A random and a greedy baseline are constructed for comparison. The random baseline uniformly samples objects to pick and locations to place them. The greedy one follows prior work \cite{gupta2013interactive} by sampling choices of objects and placements and picking the action with the max info-gain, which is the volume of revealed occlusion. Both methods ensure safety by moving only fully revealed or reconstructed objects and reconstructing objects before other actions. The greedy baseline prioritizes reconstructing unreconstructed objects if it is safe to do so. As the methods can be running for an infinite amount of time due to randomness, they are terminated once all objects are reconstructed or a timeout is reached. Hence the number of objects is given only in both baselines but not available in the proposed algorithm. The three algorithms are compared using  a PyBullet simulated environment with random rectangular prisms and cylinders on the shelf, similar to the setting in prior work \cite{huang2022mechanical}.
Experiments include three scenarios with 5, 8, or 11 objects. Each scenario has three difficulty levels defined according to the size of objects. Objects are considered small if the width and length scale is 4cm or 5cm, medium if 7cm or 8cm, and large if 10cm or 11cm. Heights are randomly picked for each object from 9cm, 12cm, and 15cm. The easy level includes all small objects; the medium level includes 60\% small, 30\% medium, and 10\% large objects; the hard level includes 25\% small, 50\% medium, and 25\% large objects. Each scenario contains 10 randomly generated problem instances with at least one fully-hidden object. The instances are fixed across algorithms to allow fair comparison. A total of 5 trials of each algorithm is executed for each instance to account for the randomness in the algorithm. Hence in total, there are 90 instances, with 450 total trials for each algorithm.

Easy, medium, and hard levels are denoted by indices 1, 2, and 3. Experiments are denoted as $n$-$k$ for $n$ objects in the scene with difficulty level $k$. Due to a large number of trials, the timeout is chosen to be 250s for 5 objects; 350s, 400s, and 450s for 8-1, 8-2, and 8-3; and 500s, 600s, and 700s for 11-1, 11-2 and 11-3. In particular, it increases linearly as the difficulty level increases. Each trial is considered a success if all objects are reconstructed before timeout and failure otherwise. The random and greedy algorithms fail to reveal the occlusion volume even when all objects are reconstructed. In contrast, the proposed algorithm fully explores the environment at termination in most trails, which is illustrated in the remaining occlusion volume shown in Fig. \ref{fig:exp-performance}.
Experiments also include YCB objects \cite{calli2015ycb} obtained from prior work \cite{liu2021ocrtoc} with 5 or 8 random objects. As the objects are larger than objects in simple geometry experiments, the scene with 11 objects is more cluttered and harder to solve, taking significantly more time.  Hence YCB experiments with 11 objects are omitted and left for future work after the system is optimized. For each number of objects, there are 10 instances, and each instance is executed for 5 trials\footnote{Code and videos: \url{sites.google.com/scarletmail.rutgers.edu/occlusion-manipulation}.}.
All experiments were executed on an AMD Ryzen 7 5800X 8-Core Processor with 31.2GB memory.

\subsection{Results and Analysis}
As shown in table \ref{exp:table-success}, the proposed algorithm achieves a near 100\% success rate in all scenarios, consistently better than random and greedy algorithms given a large number of trials. The running time and number of actions of the three algorithms are provided in Fig. \ref{fig:exp-performance}. The number of actions is at least the number of objects in each scene  but may be more if objects need multiple moves for rearrangement. The proposed method needs less time to solve problems and fewer actions. It fully reveals the scene, consistently outperforming the two baselines.

\begin{table}[t]
\centering
\caption{Success rate across scenarios.
For tasks with rectangular prisms and cylinders,
a total of 450 trials are run for each algorithm.
For tasks with YCB objects, a total of 100 trials are run for each algorithm.}
\begin{tabular}{|c|c|c|c|c|c|c|c|c|c|c|c|}
\hline
Algo & 5-1 & 5-2 & 5-3 & 8-1 & 8-2 & 8-3 & 11-1 & 11-2 & 11-3 & ycb-5 & ycb-8  \\ \hline
random & 68\% & 68\% & 54\% & 26\% & 16\% & 16 \% & 0 \% & 2\% & 8\% & 46\% & 10\%
\\ \hline
greedy & 86\% & 68\% & 70\% & 38\% & 40\% & 36\% & 22\% & 14\% & 24\% & 84\% & 46\%
\\ \hline
ours & \textbf{100\%} & \textbf{100\%} & \textbf{100\%} & \textbf{100\%} & \textbf{100\%} & \textbf{94\%} & \textbf{100\%} & \textbf{100\%} & \textbf{98\%} &
\textbf{96\%} &
\textbf{90\%}
\\ \hline
\end{tabular}
\vspace*{-7mm}
\label{exp:table-success}
\end{table}

The proposed algorithm's implementation is not optimized and is left as future work. The latency of each component,
including perception, motion planning, pose sampling, and rearrangement has been measured. Rearrangement takes more time as difficulty increases, with its time ratio ranging from 9\% on scenario 5-1 to 30\% on scenario 11-3 . The remaining time is spent mainly on motion planning and ROS communication, both taking roughly 30\%. Pose generation and perception take 23\% and 17\% of the time without rearrangement. The number of calls to each component increases as the number of objects increases. Given this, the average time of each component can be derived, with 0.18s on average for perception, 0.86s for motion planning, 0.78s for pose generation, and 0.13s for ROS communication. Depending on the number of objects, each rearrangement call takes 10s to 37s across the experiments. Failure cases are often caused by timeout due to unsuccessful motion planning queries and rearrangement routines. Therefore, more efficient implementations of the motion planning and rearrangement components will improve performance.
\vspace*{-5mm}
\begin{figure}[H]
     \centering
    \includegraphics[width=0.7
    \textwidth]{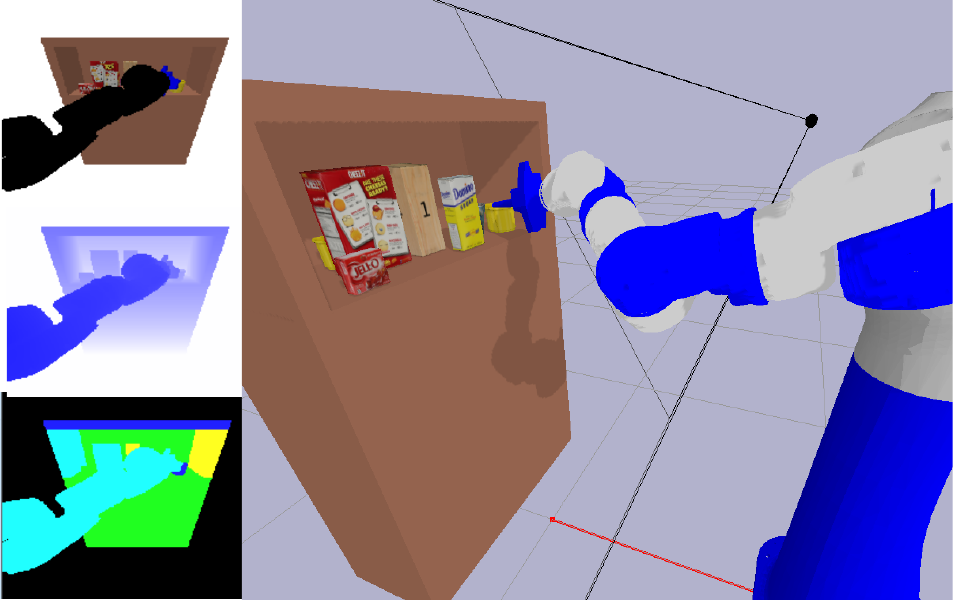}
    \vspace{-.1in}
     \caption{Example setup for 8 YCB objects. }
     \label{fig:scene-setup}
\vspace*{-10mm}
\end{figure}

\begin{figure}[H]
        \centering
        \includegraphics[width=0.97\textwidth]{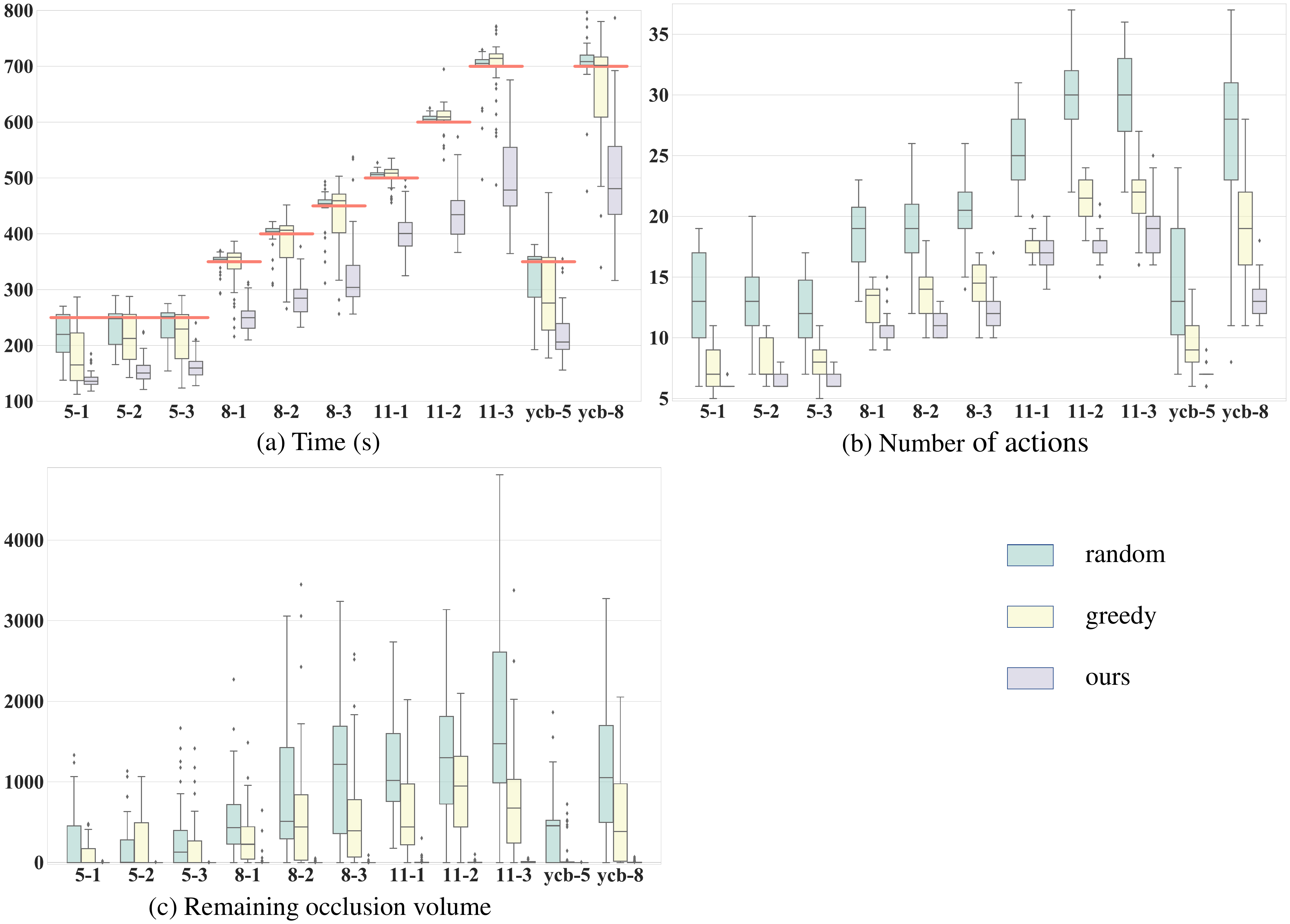}
            \vspace{-.1in}
        \caption{Box plots showing running time (a), number of actions (b), and the remaining occlusion volume (c) of the algorithms. The red line in (a) indicates the predetermined timeout for each scenario to compute success ratio in Table 1.}
    \label{fig:exp-performance}
\vspace*{-5mm}
\end{figure}
\vspace*{-5mm}

\section{Conclusion and Future Work}
This work first provides a classification of occlusion-aware manipulation tasks and then focuses on the safe object reconstruction in confined spaces with in-place rearrangement. This work defines the notion of monotonicity for this challenge. It proposes a probabilistically complete framework with an efficient algorithm instantiation for monotone instances.  Experiments in simulation show that the proposed method attains a high success rate with less computational time and fewer actions than a random and a greedy baseline. As the current work focuses on a foundational understanding of the domain, the optimization of each primitive component is left for future work. A more informed search can also optimize the algorithm to reduce the number of actions needed. Furthermore, this work will be integrated with perception primitives for object pose tracking and reconstruction to address real-world platforms.

%
%

\bibliographystyle{abbrv} 
\bibliography{main} 
\section*{Appendix}
\subsection*{Proof of DAG Property of ODG}

Safety constraints require the topological order of ODG (Occlusion Dependency Graph) to be followed so that only fully revealed objects are extracted first.
The existence of such an order is guaranteed when the ODG is a Directed Acyclic Graph (DAG). To prove that ODG is a DAG under certain regulations of objects, we retrieve to prove a stronger property in Lemma \ref{lemma:convex-appendix}.

\begin{lemma}
\label{lemma:convex-appendix}
Define the relationship $A\prec B$ if there exists a point $a$ on the 2D projection of object $A$ and a point $b$ on the projection of object $B$ such that $(c,a,b)$ forms a straight line in the order of $c$, $a$ and $b$, where $c$ indicates the camera 2D projection. Then 
\textbf{if the top-down projections of objects are convex shapes and do not intersect with each other}, objects do not form a cycle with the defined relationship.
\end{lemma}

\begin{proof}
It is safe to assume that objects can be separated from the camera by a plane.
The proof uses induction on the number of objects that are acyclic. For base case, if $A^1\prec A^2$ and $A^2\prec A^1$, then there exist lines $(c,a^1_1,a^2_1)$ and $(c,a^2_2,a^1_2)$ where $a^1_1$ and $a^1_2$ ($a^2_1$ and $a^2_2$) are inside the top-down projection of $A^1$ ($A^2$). Due to the convexity of the top-down projection, the projections of $A^1$ and $A^2$ must intersect, which leads to a contradiction.

The induction hypothesis claims that $k$ objects do not form a cycle. Then assume there exists a cycle of length $k+1$ corresponding to order $A^1\prec A^2\prec\dots\prec A^{k+1}\prec A^1$.
Assume the induced points of $A^i\prec A^{i+1}$ for $i<k+1$ are $a^i_2$ and $a^{i+1}_1$, and the induced points for $A^{k+1}\prec A^1$ are $a^{k+1}_2$ and $a^1_1$. Denote line segments $L=(a^2_1,a^2_2,a^3_1,a^3_2,\dots,a^{k+1}_1,a^{k+1}_2)$. Define the projections of a point $a$ on $l$ w.r.t. $c$ as the intersection of line $(c,a)$ with $l$.
Denote the angle of line $(c,a^i_j)$ centered at $c$ by $\theta(a^i_j)$, and w.l.o.g., assume $\theta(a^1_2)<\theta(a^1_1)$. Then the angles must be non-decreasing along $L$; otherwise, there exist line segments that share the same angle range, which constitutes a shorter cycle, thus a contradiction to the induction hypothesis.
Hence the projection of $L$ on $l$ w.r.t. $c$ is $l$.
If $L$ intersects with $l$, the intersection could be on $(a^i_1,a^i_2)$ for some $i$. Then the top-down projections of $A^1$ and $A^i$ intersect.
Alternatively, the intersection may lie on $(a^i_2,a^{i+1}_1)$ for some $i$. Then $A^i\prec A^1\prec A^{i+1}$, hence inducing a shorter cycle. Hence, $L$ and $l$ do not intersect.
Denote $d(a)$ as the distance of $a$ to its projection on $l$ w.r.t. $c$. Since $L$ does not intersect with $l$, $d(a)>0$ for any point $a\in L$. Notice that $d(a^1_1)>d(a^{k+1}_2)>0$, but $a^1_1$ lies on line $l$ hence $d(a^1_1)=0$, leading to a contradiction.
Thus the cycle of length $k+1$ does not exist.
Hence by induction, there does not exist a cycle.
\end{proof}
Notice that if object $A$ occludes object $B$, it must hold that $A\prec B$. Thus if the ODG contains a cycle, there exists a cycle for the relation "$\prec$". Hence for objects whose top-down projections are convex and do not intersect, the ODG is a DAG.

\subsection*{Image Cues for Topological Ordering of ODG}
Lemma \ref{lemma:convex-appendix} can be used to guide the search during online execution of the pipeline, where ground-truth ODG is not available. Here we assume that objects have convex top-down projections and collide iff their top-down projections intersect. One such example is when objects are either cylinders or rectangular prims.

Under this setting, image cues can be used to determine the topological ordering of objects to extract. The process is to find adjacent pixels that belong to two different objects, and one pixel has a smaller depth value (suppose of object $A$) than the other pixel (suppose of object $B$). Then $B$ is assumed to be an ancestor of $A$ in the ODG, and thus $A$ must be extracted before $B$. 
For all objects that are not occluded by others, and thus are the sinks of the ODG, they must have a smaller depth value than their adjacent objects according to Lemma \ref{lemma:convex-appendix}. Hence this process can identify all objects that are sinks in the ODG at each time.
The limitation of this approach is when two objects are adjacent to each other in the image, but do not occlude each other. In such cases, there is no certainty that they do not occlude each other based on partial observation. Hence to ensure safety a conservative estimation is that the object in the front occludes the object in the back at the adjacent pixels.

\subsection*{Time Complexity of Algorithm 1}

The main paper introduces a probablistically complete framework, which can be easily adapted to a resolution complete framework, to ease the analysis of time complexity.
We consider the case when all actions are exhausted to analyze the worst-case time complexity. 
Notice that in the above framework, $vis\_rearrange(\cdot,\cdot)$ may require an infinite number of actions to reveal an occluded space. 
By modeling occlusions by voxels, the number of actions to reveal an occluded space is finite, upper bounded by the volume of the occluded region.
The worst-case time for placement sampling can also be quantified as the number of discrete placements in the workspace.
\textbf{For simplicity, we assume that a monotone object rearrangement solver can solve each rearrangement action. The discussion of non-monotone cases is deemed at future work.}
Define the max number of placements to sample as $M$, which depends on the resolution of the space. Define the max number of actions to reveal an occluded region as $n_v$. Assume one pick-and-place action given the placement pose takes $O(k)$ time.

In the worst case, the solution requires successful primitives of number $n(2+n_v)$ in total. Define a "pass" as one where each action in the action queue is tested. Hence the number of passes is upper bounded by $n(2+n_v)$ when each pass only has one successful primitive.
The bound is not tight, as successful actions do not need to be called again.
Next, we analyze the time in one pass.
For $reconstruct(\cdot)$, $n$ calls are required, where each takes $O(k)$ time.
For $reach\_rearrange(\cdot,\cdot)$, $n$ calls are required, where each takes $O(M^nnk)$ time in the worst case to rearrange $n$ objects and to search for all possible placements.
$vis\_rearrange(\cdot,\cdot)$ also requires $n$ calls where each takes $O(M^nnk)$ time to rearrange objects.
Hence the total time is upper bounded by:
\[O(n(2+n_v)n(k+2M^nnk)=O(n^3n_vkM^n).\]

\subsection*{Extraction Mode of Objects}
Firstly, only fully revealed object is allowed to be extracted or moved, as the unrevealed parts do not have collision-free guarantee. Also,
as the finding of a safe extraction for a general-shaped object is not the target in this work, we simply assume that the initial object shape and pose satisfies the following conditions: the object shape is contained in the sweeping volume of the visible surface along the negative direction of the extraction motion, and the object is convex (otherwise there might be "holes" which can fit hidden objects during moving).
To formalize the condition, define the clearance of a surface $F$ along direction $d$ to be
\[\textrm{Clearance}(F,d)\equiv
\{x+\lambda d:\lambda \geq0, x\in F\}.\]
Similarly, for a volume $X$ we can also define clearance as:
\[\textrm{Clearance}(X,d)\equiv
\{x+\lambda d:\lambda \geq0, x\in X\}.\]
Denote the object coverage of an object $i$ at pose $s_i$ as $X_i(s_i)$, the visible surface as $F_i(s_i)$ and the extraction direction as $d$, then it is easy to see that Clearance$(F_i(s_i),d)\cap X_i(s_i)=\emptyset$.
Then we have
\begin{lemma}
If $X_i(s_i)\subseteq\textrm{Clearance}(F_i(s_i),-d)$, and object $i$ is of convex shape, then 
$\textrm{Clearance}(X_i(s_i),d)=X_i(s_i)\cup
\textrm{Clearance}(F_i(s_i),d)$.
\end{lemma}

\begin{proof}
For any point $x\in X_i(s_i)$, since $X_i(s_i)\subseteq\textrm{Clearance}(F_i(s_i),-d)$, there exists $\lambda_0\geq0$ such that
$x+\lambda_0 d\in F_i(s_i)$.
Hence for any $\lambda\geq0$, if $\lambda<\lambda_0$,
\[x+\lambda d=
(1-\frac{\lambda}{\lambda_0})x+ \frac{\lambda}{\lambda_0}(x+\lambda_0d)
\in X_i(s_i).\]
If $\lambda\geq\lambda_0$,
\[x+\lambda d
=x+\lambda_0 d + (\lambda-\lambda_0)d\in\textrm{Clearance}(F_i(s_i),d).\]
Thus $\textrm{Clearance}(X_i(s_i),d)=X_i(s_i)\cup\textrm{Clearance}(F_i(s_i),d)$.
\end{proof}
This entails that
\begin{enumerate}
    \item the sweeping volume along the negative direction of the extraction motion of visible surface should contain the object geometry (a condition on the object geometry and its pose)
    \item the object geometry should be convex.
\end{enumerate}
The counter examples are included in Figures \ref{fig:safety-condition}, and the discussion about object convexity is included in Figures \ref{fig:safety-condition-convex}.

\begin{figure}[ht]
    \centering
     \begin{subfigure}[t]{0.48\textwidth}
    \centering
    \includegraphics[width=0.95\textwidth]{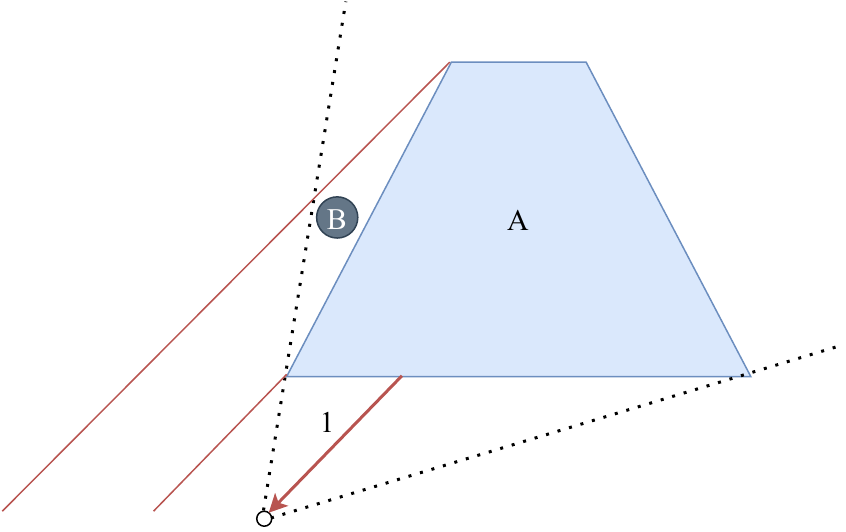}
    \caption{Direction 1 does not satisfy the first condition. $A$ might collide with a hidden object $B$ which cannot be detected given current observation.}
    \label{fig:safety-extraction}
    \end{subfigure}\quad 
     \begin{subfigure}[t]{0.48\textwidth}
         \centering
        \includegraphics[width=0.95\textwidth]{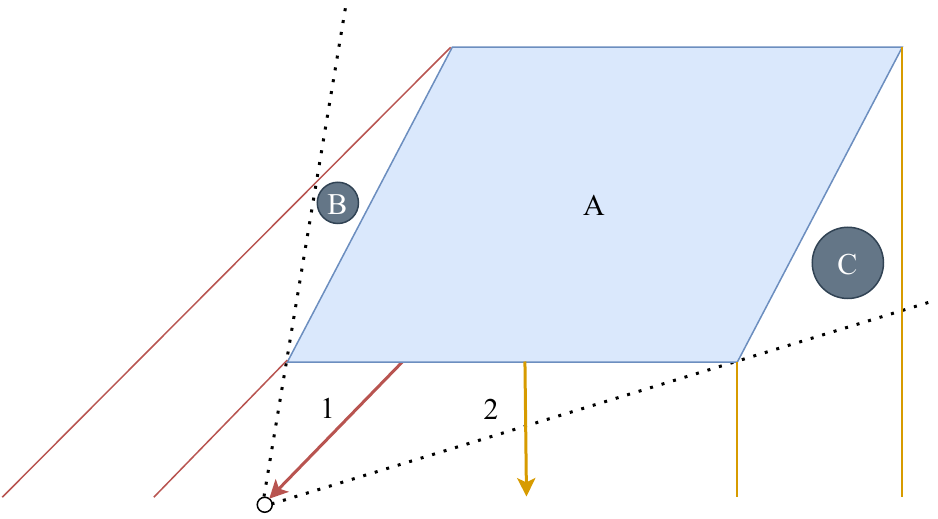}
         \caption{Both direction 1 and 2 do not satisfy the first condition. Without any restriction on the shape of the object, it is nontrivial to find a valid direction to move the object which guarantees collision-free sweeping volume.}
        \label{fig:safety-extraction-2}
     \end{subfigure}\quad 
     \begin{subfigure}[t]{0.48\textwidth}
         \centering
        \includegraphics[width=0.8\textwidth]{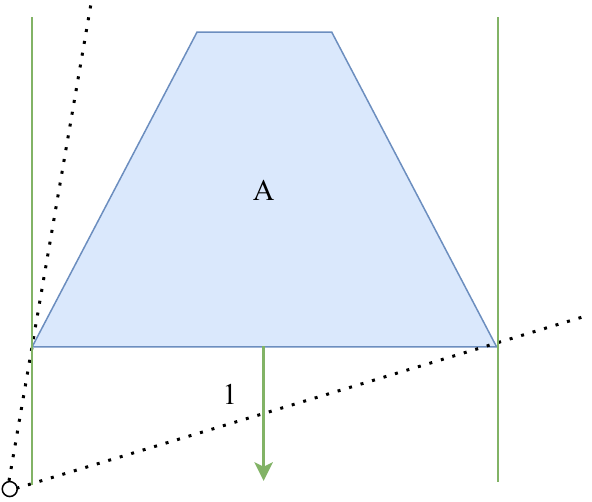}
         \caption{As object $A$ is contained in the sweeping volume of the visible surface. Moving in the perpendicular direction guarantees safety.}
        \label{fig:safety-object-shape-pose}
     \end{subfigure}\quad
     \begin{subfigure}[t]{0.48\textwidth}
         \centering
        \includegraphics[width=0.8\textwidth]{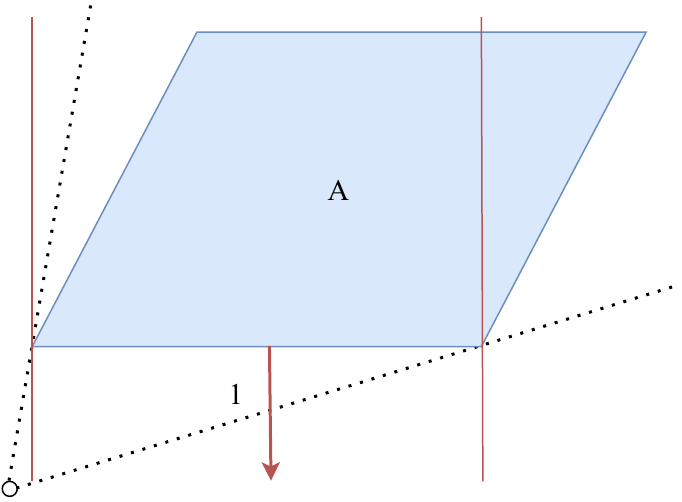}
         \caption{The sweeping volume of perpendicular direction may not always contain the object. The object shape has to be restricted.}
        \label{fig:safety-object-shape-pose-2}
     \end{subfigure}     
    \caption{Illustrations of the Condition on Sweeping Volume of Visible Surface.}
    \label{fig:safety-condition}
\end{figure}

\begin{figure}[ht]
    \centering
     \begin{subfigure}[t]{0.48\textwidth}
         \centering
        \includegraphics[width=0.9\textwidth]{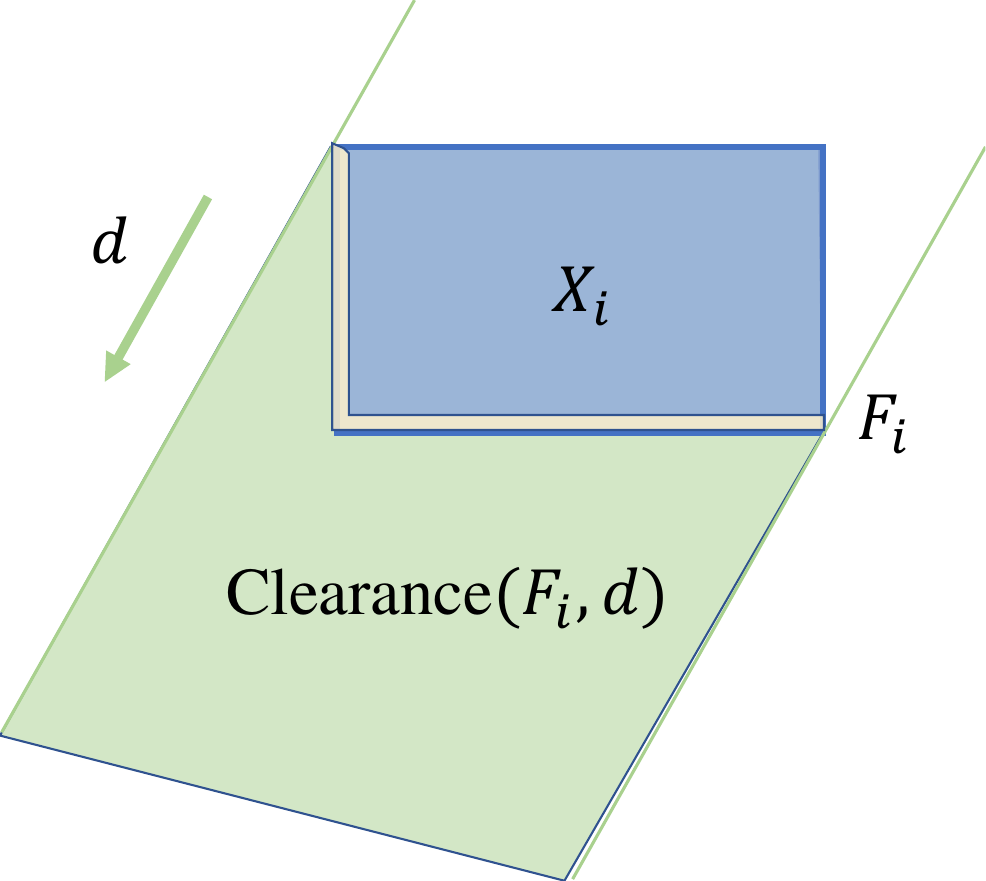}
         \caption{When the object is contained in the sweeping volume of the visible surface, it also has to be convex.
         Then the clearance along direction $d$ is the union of Clearance$(F_i,d)$ and object model $X_i$.}
     \end{subfigure}\quad
     \begin{subfigure}[t]{0.48\textwidth}
         \centering
        \includegraphics[width=0.9\textwidth]{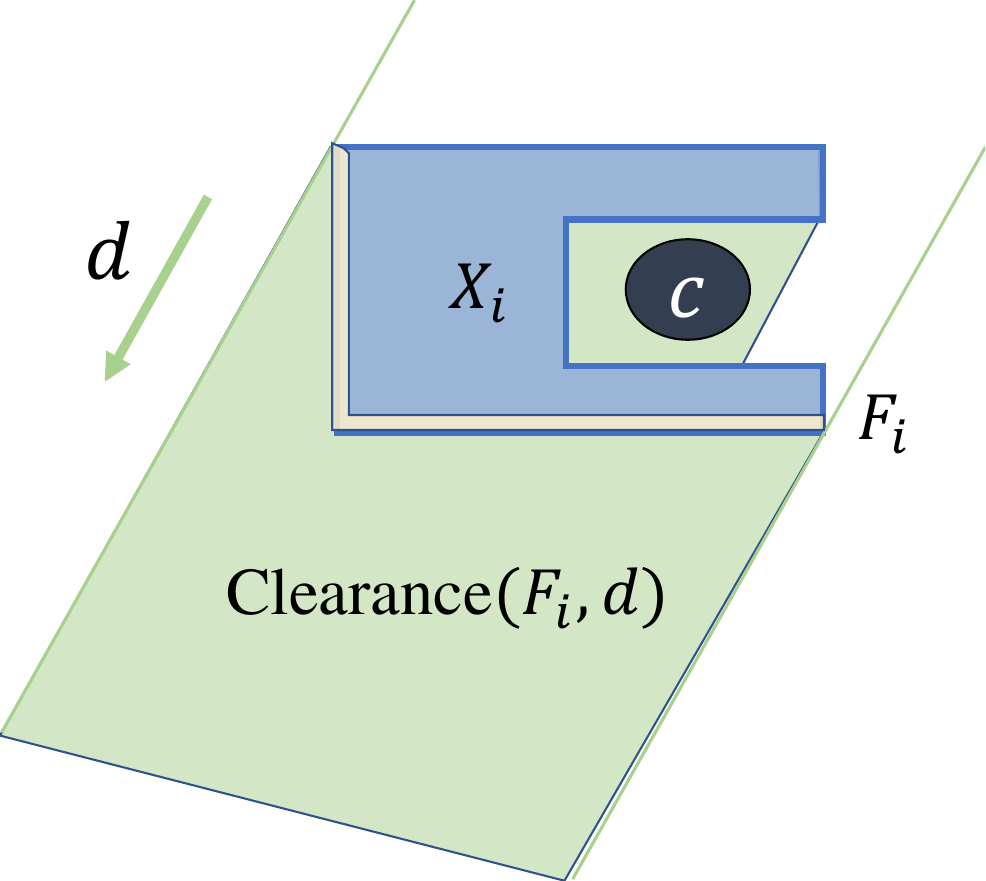}
         \caption{If the object is not convex, the object model may contain "holes" which may lead to collision.}
     \end{subfigure}    
    \caption{Illustrations of Convexity Condition.}
    \label{fig:safety-condition-convex}
\end{figure}

For simplicity, we satisfy the two conditions by restrict the objects to either cylinders or rectangular prisms. Under this assumption, the perpendicular direction to the shelf serves as a safe direction to extract the object.
Note that if the object is a cylinder, under the assumption that the cylinder is far enough from the camera, the perpendicular extraction works as a safe extraction mode.

\end{document}